\def\argmax{\operatornamewithlimits{argmax}}
\newcommand{\mathbbm}[1]{\text{\usefont{U}{bbm}{m}{n}#1}}
\newcommand{\dshrink}[1]{{\scriptstyle #1 }}
\theoremstyle{plain}
\newtheorem{theorem}{Theorem}[section]
\theoremstyle{definition}
\newtheorem{assumption}[theorem]{Assumption}
\theoremstyle{remark}
\icmltitlerunning{Action-Dependent Optimality-Preserving Reward Shaping}
\begin{document}

\twocolumn[
\icmltitle{Action-Dependent Optimality-Preserving Reward Shaping}



\begin{icmlauthorlist}
\icmlauthor{Grant C. Forbes}{yyy}
\icmlauthor{Jianxun Wang}{yyy}
\icmlauthor{Leonardo Villalobos-Arias}{yyy}
\icmlauthor{Arnav Jhala}{yyy}
\icmlauthor{David L. Roberts}{yyy}

\end{icmlauthorlist}

\icmlaffiliation{yyy}{Department of Computer Science, North Carolina State University, Raleigh, USA}

\icmlcorrespondingauthor{Grant C. Forbes}{gforbes@ncsu.edu}
\icmlcorrespondingauthor{Jianxun Wang}{jwang75@ncsu.edu}
\icmlcorrespondingauthor{Leonardo Villalobos-Arias}{lvillal@ncsu.edu}
\icmlcorrespondingauthor{Arnav Jhala}{ahjhala@ncsu.edu}
\icmlcorrespondingauthor{David L. Roberts}{dlrober4@ncsu.edu}

\icmlkeywords{Machine Learning, ICML}

\vskip 0.3in
]



\printAffiliationsAndNotice{}  
\begin{abstract}
Recent RL research has utilized reward shaping--particularly complex shaping rewards such as intrinsic motivation (IM)--to encourage agent exploration in sparse-reward environments. While often effective, ``reward hacking'' can lead to the shaping reward being optimized at the expense of the extrinsic reward, resulting in a suboptimal policy. Potential-Based Reward Shaping (PBRS) techniques such as Generalized Reward Matching (GRM) and Policy-Invariant Explicit Shaping (PIES) have mitigated this. These methods allow for implementing IM without altering optimal policies. In this work we show that they are effectively unsuitable for complex, exploration-heavy environments with long-duration episodes. To remedy this, we introduce Action-Dependent Optimality Preserving Shaping (ADOPS), a method of converting intrinsic rewards to an optimality-preserving form that allows agents to utilize IM more effectively in the extremely sparse environment of Montezuma's Revenge. We also prove ADOPS accommodates reward shaping functions that cannot be written in a potential-based form: while PBRS-based methods require the cumulative discounted intrinsic return be independent of actions, ADOPS allows for intrinsic cumulative returns to be dependent on agents' actions while still preserving the optimal policy set. We show how action-dependence enables ADOPS's to preserve optimality while learning in complex, sparse-reward environments where other methods struggle.
\end{abstract}

\section{Introduction}

There is growing interest in the Reinforcement Learning (RL) literature in using reward shaping to train agents in sparse-rewards environments that would otherwise be intractable~\citep{mataric1994reward, randlov1998learning, bellemare2016unifying}; specifically, interest in Intrinsic Motivation (IM): complex, non-Markovian reward functions used to encourage exploration in sparse reward environments~\citep{pathak2017curiosity,burda2018large}. 

It has been noted that both traditional reward shaping~\citep{randlov1998learning} and IM~\citep{burda2019exploration} can be ``hacked," with the agent learning to optimize the shaping reward at the expense of the actual reward. Potential-Based Reward Shaping (PBRS) aims to rectify this and provide a form for reward-shaping terms in which they can be guaranteed to not alter the set of optimal policies of the underlying environment. To apply PBRS to IM, previous work on Potential-Based Intrinsic Motivation (PBIM)~\cite{pbim} and Generalized Reward Matching (GRM)~\cite{grm} implement arbitrary intrinsic rewards while preserving the theoretical optimality guarantees of PBRS. These methods have proven effective at speeding up training and preventing divergence from an optimal policy in grid world and cliff walking domains, with a tabular exploration reward and Random Network Distillation (RND), respectively. However, their efficacy is still untested in more complex environments. Policy-Invariant Explicit Shaping (PIES)~\citep{behboudian2022policy}, another method developed for implementing shaping rewards without altering the optimal policy, has similarly been tested thus far only in relatively simple environments, and not in any environments wherein training with intrinsic reward is itself essential in consistently obtaining any extrinsic rewards in agent-environment interactions during training. Testing in Montezuma's Revenge, a benchmark environment that has been widely acknowledged~\citep{mnih2015human, burda2019exploration} to possess these characteristics, we find that all of these methods, while preserving the optimal policy set, detract from the agent's ability to learn to the extent that none of them can outperform an agent training on RND alone.

Motivated by this, we develop Action-Dependent Optimality-Preserving Shaping (ADOPS), a method for converting any arbitrary shaping reward (including IM) to a form that preserves optimality. ADOPS functions by consulting an agent's critic networks' estimations of the extrinsic and intrinsic value functions, and using these estimations, actively adjusting the intrinsic reward seen by the agent if and only if that reward would cause an action to be preferred when it would not be preferred by external rewards alone, or vice-versa. ADOPS improves on existing methods in several key ways:

1. \textbf{We forego the need for several key assumptions required for previous methods to ensure optimality.} These include the requirement that the environment be episodic and that the underlying IM is ``future-agnostic.''

2. \textbf{We encompass a provably wider set of optimality-preserving shaping functions than prior methods.} While GRM and PBIM both ensure and require that the expected intrinsic return at any given time step is action-independent, our method produces shaping functions whose returns can be action-dependent, yet still preserve optimality. Additionally, while PIES preserves optimality by eventually returning no shaping rewards at all, our method allows the agent to receive shaping rewards for an arbitrarily long amount of time. We argue that both of these improvements over prior methods are key to ADOPS's ability to outperform the baseline in complex, extremely sparse environments.

3. \textbf{ADOPS empirically improves performance over baseline IM in complex, extremely sparse environments where preexisting methods for preserving optimality fail.} We empirically demonstrate improvement over other optimality-preserving methods and the baseline IM in Montezuma's Revenge, and thus achieve a new SOTA in implementing IM while preserving the optimal policy.

\section{Background and Related Work}
\label{sec:background}
Here, we briefly discuss the relevant background literature more broadly. In Section~\ref{preliminaries}, we describe more closely the three specific methods we are building off of, and which we test against ADOPS in Section \ref{sec:empirical}.
\subsection{Intrinsic Motivation}
Many problems in reinforcement learning are reward-sparse~\cite{mataric1994reward} and are difficult for agents to properly learn. Sparse rewards can lead to an agent never learning a viable policy. One method to remedy this is by granting a shaping reward: an extra reward, added to the native environment reward, to decrease sparseness, and convey more useful information. 

Intrinsic Motivation is a subfield of reward shaping that aims to navigate sparse-reward environments by giving the agent additional ``intrinsic" reward shaping functions. These are usually distinguished from traditional reward shaping functions both by their generality, as most of them are domain-agnostic, and by their complexity. IM methods are almost universally non-Markovian, often involving the agent's entire training history, and involving additional neural networks and/or auxilliary learning algorithms.

Intrinsic motivation is often used as a means to reward exploration of the environment. IM methods such as count-based exploration \citep{bellemare2016unifying}, Intrinsic Curiosity Module (ICM)
\citep{pathak2017curiosity}, and Random Network Distillation (RND) \citep{burda2018large,burda2019exploration} incentivize visiting unfamiliar states. More recent IM works implement increasingly complex strategies \cite{badia2020never,yuan2023automatic}.

While useful for exploring, IM algorithms, including RND, are known to alter the optimal policy of the underlying environment. Take for instance the ``noisy TV problem"~\citep{burda2018large}, in which an agent prioritizes an inrinsically-rewarding stochastic area of the state space at the expense of pursuing external rewards. A version of this problem exists in Montezuma's Revenge with RND in particular, deemed ``dancing with skulls,'' in which the agent tends to actively pursue ``risky'' behavior \citep{burda2019exploration}.

There have been some efforts to mitigate the noisy TV problem: Chen et al.~\cite{chen2022redeeming} introduce EIPO, which automatically adjusts the scaling factor of the IM, reducing the reward when exploration is unnecessary and increasing it when the agent must explore, and Le et al.~\cite{le2024beyond} mitigate it using ``surprise novelty.'' However, neither of these approaches provide theoretical guarantees that the optimal policy set is unchanged, and they both require non-trivial additional (neural network) architecture and computational overhead.
Forbes et al.~\cite{pbim} introduce PBIM, a Potential-Based-Reward-Shaping (PBRS)-based approach that overcomes both of these limitations, and further expand it to the more general method of Generalized Reward Matching (GRM)~\cite{grm}. However, these works don't demonstrate efficacy at scale. We discuss these works in further detail in Section~\ref{preliminaries}.


 
 \subsection{Potential-Based Reward Shaping}
 
 Potential-Based Reward Shaping is a subfield of reward shaping that specifically studies reward-shaping terms that are theoretically guaranteed to maintain the optimal policy set of the underlying environment.
 The foundational paper in the field ~\cite{ng1999policy} explores how adding a reward shaping term to an MDP can alter the optimal policy. The study uncovers that the optimal policy will remain unchanged in infinite-horizon MDPs, or those with an absorbing state, if the shaping reward follows the form of a state-based potential $F = \gamma \Phi(s') - \Phi(s)$.
 This is further extended by Wiewiora et al~\cite{wiewiora2003potential} by creating a state-action potential $\Phi(s,a)$, whereas the original was state-only. Such potentials, however, must be removed from the final Q-values learned by the model to guarantee policy invariance. Similarly, Devlin et al.~\cite{devlin2012dynamic} extend the original potential notation by adding time-dependence, $\Phi(s,t)$, similarly guaranteeing policy invariance. Harutyunyan et al.~\cite{harutyunyan2015expressing} uncovered a method to convert any Markovian reward function into a potential-based version that follows the form $\Phi(s,a,t)$. A later study by Behboudian et al.~\cite{behboudian2022policy} however demonstrates that this potential-based conversion can, in fact, alter the optimal policy. In this same article, the authors propose Policy-Invariant Explicit Shaping~(PIES), an algorithm that decrements the magnitude of shaping rewards to guarantee policy invariance by the end of training. We discuss this work more closely in Section~\ref{preliminaries}.

PBRS was extended to the finite-horizon setting in \cite{grzes2017reward}, and this finite-horizon setting was then later shown to be able to accomodate most IM terms' complex variable-dependence via PBIM~\cite{aaai24, pbim} and later GRM~\cite{grm}.

In the next section, we highlight three of the most recent potential-based approaches: PIES, PBIM, and GRM. These are all ``plug-and-play'' approaches, signifying that they are generally adaptable methods for converting a reward shaping function (or IM) into a form that preserves the optimal policy (provided some assumptions are met). They will be the baseline algorithms that we compare our method against.

\section{Preliminaries and Previous Methods}\label{preliminaries}

We define a Markov Decision Process (MDP) as a tuple $M = (S, S_0, A, P, \gamma, R)$, where $S$ is the state space, $S_0 \subseteq S$ is the set of start states, $A$ is the action space, $P(s_{t+1} = s' | s_t = s, a_t = a) $ is the probability of transitioning to state $s'$ by taking action $a$ in state $s$, $\gamma \in [0,1]$ is the discount factor, and and $R$ is the reward function.
An agent in an MDP follows a policy $\pi(a|s) : S \times A \to [0,1]$, which gives the probability of taking action $a$ in state $s$. An optimal policy maximizes the value function:
\begin{equation} \label{value}
    V_{M}^\pi = \displaystyle \mathop{\mathbb{E}}_{a \sim \pi, s_0 \sim S_0, s \sim P} \sum_{t = 0}^{N-1} \gamma^t R_t,
\end{equation}
 where $R_t$ is the reward at time $t$, and $N$ is the number of time steps in a given episode. For infinite-horizon cases, the sum to $N-1$ becomes an infinite sum. 

Reward shaping methods, such as IM, add a reward $F_t$ on top of the regular MDP reward $R_t$. This defines a new MDP $M' = (S, S_0, A, P, \gamma, R')$, where
\begin{align}\label{shaping_reward}
    R'_t = R_t + F_t.
\end{align}
Thus an optimal policy in $M'$ may be suboptimal in $M$, and we are interested in constructing an $F_t$ such that this is provably not the case.

\subsection{Potential-Based Intrinsic Motivation (PBIM)}
Potential-Based Intrinsic Motivation~\cite{pbim}, inspired by~\cite{ng1999policy}, proves that the optimal policy set of an environment will remain unchanged by the addition of a shaping reward in the form
\begin{equation} \label{pot_based_shaping_reward}
    F_t = \gamma\Phi_{t+1} - \Phi_t,
\end{equation}
where $\Phi_t$ is a potential function that meets the boundary condition:
\begin{equation} \label{boundary_condition}
{\displaystyle
     \mathop{\mathbb{E}}_{a \sim \pi, s \sim T, R_n \sim R}
     \left( \gamma^{N-t}\Phi_N - \Phi_t \right) = \Phi'_t, 
     \forall t \in (0,...N-1),
 }
\end{equation}
and $\Phi'_t$ is an arbitrary function that is constant with respect to action $a_{t}$.

Any arbitrary shaping reward $F_t$ can be transformed into a potential-based form if the value of $F_t$ does not depend on any future actions that the agent takes (For example, in episodic environments where the shaping reward is given retroactively). That is, $F_t$ must meet
\begin{equation} \label{assumption1}
    F_t \text{ is constant w.r.t. } a_{t'>t}, \forall t, t' \in (0,\ldots,N-1),
\end{equation}

The authors of PBIM propose two transformation methods, normalized and non-normalized PBIM. Normalized PBIM is defined as 
\begin{equation}\label{pbim_norm}
F'^{PBIM}_t = \begin{cases}  \sum_{i = 0}^{N-2} -\gamma^{i-N}F'_i, &  \text{if } t = N - 1 \\  F_t - \bar{F}, &  \text{if } t \neq N - 1, \end{cases}
\end{equation}
where $\bar{F}$ is an estimated average shaping reward for the current policy. $F'_t$ for non-normalized PBIM is defined identically, except that the $\bar{F}$ term is set to zero. They prove that using either of these conversion methods with an $F_t$ that meets the assumption in Equation~\ref{assumption1} will result in a $F'_t$ that leaves the optimal policy unchanged if used as the shaping reward in Equation \ref{shaping_reward}. Intuitively, both of these methods function by waiting until the last time step of an episode, then subtracting all the accumulated intrinsic rewards from the agent.

\subsection{Generalized Reward Matching (GRM)}
Generalized Reward Matching~\cite{grm} is a broader optimality-preserving reward shaping method that encompasses both PBIM and all other PBRS-based methods. It defines a matching function $m_{t, t'}: N \times N \to [0,1]$ that `matches' every instance of the shaping reward at timestep $t'$ with an appropriately-discounted negative reward in a future timestep $t$. The general form of a GRM shaping reward is
\begin{equation}\label{drawer}
    F^{\text{'GRM}}_t = F_t - \sum_{i = 0}^{t}\gamma^{i - t}F_{i}m_{t, i}.
\end{equation}
GRM rewards are proven not to alter the optimal policy under two conditions:
\begin{enumerate}
    \item Fully-matching: Each given shaped reward must be subtracted exactly once (Although it may be partially subtracted across various timesteps): $ \forall_{t'} \sum_{j = t'}^{N-1} m_{j, t'} = 1.$
    \item Future-agnostic: Each shaped reward can only be discounted in future timesteps: $\forall_{t,t'>t} \quad  m_{t, t'} = 0.$
\end{enumerate}

A family of GRM functions is introduced in the same study, where the shaping reward is `matched' either after a delay of $D \geq 0$ timesteps, or at the end of an episode, whichever comes first. 
The corresponding shaping reward is defined as:
\begin{equation}\label{hyperparam_tuning}
{\scriptstyle
    F^{\text{'GRM}}_t(D) = }\begin{cases}
        {\scriptstyle F_t } & {\scriptstyle \text{if } t < D }\\ 
        {\scriptstyle F_t - \gamma^{-D}F_{t-D} } & {\scriptstyle \text{if } D \leq t < N-1 }\\
        {\scriptstyle \sum\limits_{i = 0}^{D-1} - \gamma^{i - D} F_{N - 1 - D + i} } & {\scriptstyle \text{if } t = N-1.}
    \end{cases}
\end{equation}
Note here, as noted in \cite{grm}, that if $D\geq N-1$ for any given episode, then Equation~\eqref{hyperparam_tuning} becomes equivalent to PBIM, while if $D=0$, it becomes equivalent to having no shaping reward at all.
In our experiments, we use this same parameterization as a representative sample of GRM methods.
\subsection{Policy-Invariant Explicit Shaping (PIES)}
Policy-Invariant Explicit Shaping~\cite{behboudian2022policy} proposes learning from a shaping reward without altering the optimal policy by multiplying the expected return of that shaping reward by a scaling coefficient $\zeta$ that scales linearly to zero before the end of training. During training, PIES begins with $\zeta_0 = 1$, then for each episode $n$ during training, $\zeta$ is updated according to 
\begin{align}\label{pies_decay}
    \zeta_n = \begin{cases}
        \zeta_{n-1} - \frac{1}{C} \quad &\text{if} \quad \zeta_{n-1} > \frac{1}{C} \\
        0 \quad &\text{otherwise}.
    \end{cases}
\end{align}
Here, $C$ is a coefficient that controls how quickly $\zeta$ decays. Ideally, it will reach zero with enough training time left that the agent can converge to an optimal policy, if the reward shaping term is one that the agent would otherwise hack.

\section{More General Conditions For Optimality}

A PBRS term that follows Equation~\eqref{boundary_condition} guarantees that the optimal policy set of the original MDP will remain unchanged. This includes the PBRS terms utilized in both PBIM and GRM.\footnote{The $\Phi_t$ terms in each of these methods are implicit, rather than explicit, but both methods are designed around ensuring that Equation \ref{boundary_condition} always holds.} However, though it is a sufficient condition for the preservation of optimality, it is not a necessary condition, and we prove in Theorem \ref{anti_grm_theorem} that there will exist optimality-preserving reward-shaping terms that cannot be written as a difference of potentials that follow Equation~\eqref{boundary_condition} in any non-trivial environment. 

Similarly, PIES preserves optimality by using only extrinsic reward in the latter stages of training: this is also clearly a sufficient condition to preserve optimality, but just as clearly not a necessary one. As such, it follows that many optimality-preserving reward-shaping functions cannot, in principle, ever be obtained from either PIES or the PBRS-based methods described in prior literature.

In this section, we examine the more general condition from which Equation \ref{boundary_condition} is derived, and derive a more general condition, which captures optimality-preserving reward-shaping potentials that are ``action-dependent,'' and thus provably cannot be accommodated by prior plug-and-play optimality-preserving methods.
In Section~\ref{sec:ADOPS} we then introduce ADOPS, which can accommodate this more general form.

\subsection{Conditions For Optimality-Preserving Reward Shaping Functions Beyond PBRS}

We define the Q-function
\begin{align}
    Q^\pi_M(s,a, t) = R_M(s,a, t) + V^\pi_{M}(s', t+1)
\end{align}
to be the expected discounted return of taking action $a$ in state $s$ of MDP $M$, then following policy $\pi$.\footnote{Note the direct $t$-dependence, as we're generally dealing with non-Markovian reward functions.} For simplicity of notation, we also define $Q^\pi_{M}(s_t,a_t, t) = Q^\pi_{M,t} = Q^\pi_{M}$. We notate the Q-function of taking an action and then following an optimal policy as $Q^{\pi^*}_{M} = Q^*_{M}$.
We can then write the general condition for preserving optimality: we want to ensure the optimal policy set is the same for both the original MDP $M$ and the shaped MDP $M'$, that is:
\begin{align}\label{good_objective}
   \argmax_a Q_{M}^* &= \argmax_a Q_{M'}^* \quad \forall s, t.
\end{align}
This is the condition that all prior work in PBRS seeks to preserve. Note crucially that we are defining $\pi^*$ such that it is an optimal policy in the \textit{original} environment $M$, not necessarily in an environment $M'$ with the shaping reward present. For clarity on this point, we define $Q^\pi_E = Q^\pi_M$ to be the extrinsic Q function, and $Q^\pi_{IE} = Q^\pi_{M'} = Q^\pi_{E} + Q^\pi_{I}$ to be the combined extrinsic and intrinsic Q functions. We define $V^\pi_{IE}, V^\pi_E,$ and $V^\pi_I$ likewise. Thus, we have
\begin{align}\label{good_objective_2}
     \argmax_a Q_{E}^* &= \argmax_a( Q_{E}^* + Q_{I}^* )\quad \forall s, t, \pi^*.
\end{align}
Note the enumeration over all $\pi^*$: this is because, while every optimal policy by definition has identical $Q_E^*$ and $V_E^*$ to every other optimal policy, they may differ from each other intrinsically, resulting in different $Q_I^*$ and $V_I^*$.

If we now define $\bar{a}$ as any action not in $\argmax_a Q_{E}^*$, then Equation \ref{good_objective_2} becomes equivalent to 
\begin{align}
 V^{\pi^*_1}_{IE}(s,t) = V^{\pi^*_2}_{IE}(s,t) \quad \forall s,t,\pi^*_1,\pi^*_2\label{equality_condition} \\
    Q^*_{IE}(s,\bar{a},t) < V^*_{IE}(s,t) \quad \forall s,\bar{a},t,\pi^*.\label{inequality_condition}
\end{align}

Intuitively, the first of these conditions says that every action that would be optimal without IM must remain optimal after the addition of the IM, while the second condition says that any suboptimal action must remain suboptimal with the addition of any shaping reward.\footnote{Implicit in the step from Equation \ref{good_objective_2} to Equation \ref{equality_condition} is the fact that $Q^*_E(a^*) = V^*_E \quad \forall a^* , \pi^*$.}

In prior work in PBRS, the optimal policy set is provably unchanged due to mathematical guarantees that $Q_{M'}^*$ differs from $Q_{M}^*$ by some term that is \textit{independent} of the agent's actions at a given time step: see for example the $\Phi(s)$ of \cite{ng1999policy}, or the $\Phi'_t$ of \cite{pbim}. In other words, all prior work in this area has met the condition of Equation~\ref{good_objective_2} by removing $Q_I^*(a)$'s $a$-dependence, and essentially setting $Q_I^*(a) = V_I^* \quad \forall a$. While this is a sufficient condition to ensure that optimality is preserved (as this term then drops out of the $\argmax_a$, it is not a necessary one. As we will see, it leaves out a theoretically interesting and empirically useful subset of optimality-preserving reward shaping functions: those whose cumulative intrinsic returns are allowed to depend on the agent's actions. In Appendix \ref{ADGRM_proofs}, we offer a formal proof that there exist optimality-preserving reward shaping functions that meet these conditions we have just provided, but which cannot be accounted for by any PBRS function. In the next section, we derive a new plug-and-play method of utilizing arbitrary IM functions while theoretically guaranteeing preservation of optimality, one that both overcomes the limitations of flexibility highlighted in Appendix \ref{ADGRM_proofs} and foregoes the need for several key assumptions (an episodic environment, and future-agnosticity as defined in Equation \ref{assumption1}). We then show that our method speeds training efficiency over a baseline in a complex, difficult environment where other optimality-preserving methods fail. 

\section{ADOPS For Action-Dependent Optimality-Preserving Shaping}\label{sec:ADOPS}
Here, we introduce our main contribution, and prove its efficacy at preserving the optimal policy.

\subsection{The `Ideal' ADOPS Function}
Inspired by the conditions in Equations \ref{equality_condition} and \ref{inequality_condition}, we first introduce an ``ideal'' reward-shaping conversion method that actively checks whether these conditions are satisfied, and if not, modifies the initial shaping reward just enough to ensure that they are. We define this ideal ADOPS reward as $F' = F + F_2$, where $F$ is some arbitrary initial shaping reward, and $F_2$ is defined as
\begin{equation} \label{f_2_strong}
{\scriptstyle F_2 = }   
\begin{cases}
        {\scriptstyle \min (0, V^*_E - Q^*_E + V^*_I - \gamma V^*_I(s') - F - \epsilon) } & {\scriptstyle \text{if} \quad Q^*_E < V^*_E }\\
        {\scriptstyle \max (0, V^*_E - Q^*_E + V^*_I - \gamma V^*_I(s') - F ) } & {\scriptstyle \text{if} \quad Q^*_E \geq V^*_E. }
    \end{cases}
\end{equation}
Here, $\epsilon$ is an arbitrarily small positive constant. We are defining $V^*_I$ here such that it represents the \textit{maximum} intrinsic reward achievable while following an extrinsically optimal policy. 

The first case of this equation can be intuitively thought of as checking to see if Equation \ref{inequality_condition} is violated, and if so adjusting the intrinsic reward downwards until it no longer is (``if this action is extrinsically suboptimal, make sure that it is still suboptimal when the IM is taken into account''). Conversely, the second case checks to see if Equation \ref{equality_condition} is violated, and adjusts the IM upwards if so until it is not (``if this action is extrinsically optimal, make sure that its intrinsic returns are equal to those of every other extrinsically optimal action'').

A full proof that this version of ADOPS preserves the optimal policy can be found in Appendix \ref{ideal_adops_proof_section}. In the next section, we introduce a practically-implementable version of ADOPS, and prove that it preserves optimality, as well.
\subsection{A Practical, Easy-To-Use Form of ADOPS}

While it would be ideal, it is usually not feasible to implement Equation~\eqref{f_2_strong}
, as it requires an accurate estimate of the optimal value function. Let's assume instead that we have access to some critic function that allows us to make approximations of the value function of a given state, and the Q-function of taking some action in that given state, under the agent's current policy $\pi$. Let us also assume that this critic handles the extrinsic and intrinsic rewards separately, such that we can deal with them independently (this is already a common practice, for example in \cite{burda2018exploration}, whose example we follow in Section \ref{sec:empirical}
). We notate these estimates as $\hat{V}_E^\pi, \hat{V}_I^\pi, \hat{Q}_E^\pi,$ and $\hat{Q}_I^\pi$. This notation has implicit variable-dependence, but occasionally throughout this section we make their variable dependence explicit, as in $\hat{Q}_E^\pi(s,a)$.

Given some initial shaping reward $F$, about which we make no assumptions, we define the ADOPS reward to be
\begin{equation} \label{f_2_weak}
    {\scriptstyle F_2 = }
    \begin{cases}
        {\scriptstyle \min (0, V^\pi_E - Q^\pi_E + V^\pi_I - \gamma V^\pi_{I}(s') - F - \epsilon) } & {\scriptstyle \text{if} \quad Q^\pi_E < V^\pi_E } \\
        {\scriptstyle \max (0, V^\pi_E - Q^\pi_E + V^\pi_I - \gamma V^\pi_{I}(s') - F ) } & {\scriptstyle \text{if} \quad Q^\pi_E \geq V^\pi_E.}
    \end{cases}
\end{equation}
We define this shaping reward in terms of the actual value and Q-function values (as opposed to their critic-based estimates) for the proof, and discuss our conditions for implementing this practically at the end of this section.

For our proof, we make an additional assumption about our optimization process itself. Let $\pi_{n}$ be a policy that takes action $a_n$ in some state $s \in S$. Additionally, take $\pi_m$ to be a policy that is identical to $\pi_n$, except that it takes action $a_m$ in that same state $s$. We will say that a policy $\pi_n$ is ``unstable'' if there exists a $\pi_m$ such that $Q^{\pi_m}_{IE}(s, a_n) < V^{\pi_m}_{IE}(s)$ and $Q^{\pi_n}_{IE}(s, a_m) \geq V^{\pi_n}_{IE}(s)$. In other words, if a policy is unstable, that means there exists some other extremely similar policy, which differs by only one action in one state, and yet is strictly preferred under any nontrivial mixture of the two policies. Conversely, a policy is ``stable'' if no such strictly-preferable policy exists. We can now state our assumption:
\begin{assumption}\label{ass:will_explore}
    The training algorithm being used will, upon convergence, only execute stable policies.
\end{assumption}
Intuitively, we should expect any competent learning algorithm not to ever converge to an unstable policy, as any exploration around the local policy space would cause the agent to learn action $a_m$ over $a_n$. Indeed, any learning algorithm that does not ever explore the local policy space sufficiently so as to find such a strictly-preferred, small perturbation to its current policy is unlikely to have learned much in the first place. As such, this is a reasonable assumption to make, at least in the limit as training converges to a policy, which is the domain in which we're interested for the purposes of leaving the set of optimal policies unchanged.
 
We now prove that adding an ADOPS reward term to any initial $F$ preserves optimality.
\begin{theorem}
    An intrinsic reward of the form $F' = F + F_2$ with $F_2$ defined according to Equation \ref{f_2_weak} will preserve the set of optimal policies.
\end{theorem}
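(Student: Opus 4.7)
The plan is to treat the theorem as a fixed-point statement: the shaping $F_2$ is defined in terms of the agent's own $V^\pi, Q^\pi$, so $M'$ only becomes a genuine MDP once a policy has converged. I would therefore work in the convergent regime and show two things, that any stable, $M'$-Bellman-optimal $\pi'$ is also extrinsically optimal in $M$, and conversely that every extrinsically optimal $\pi^*$ is itself stable (so it is a legitimate attractor of training, not ruled out by Assumption \ref{ass:will_explore}).

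Step one is a single algebraic unpacking of Equation \ref{f_2_weak}. Writing $\tilde{Q}^\pi_{IE}(s,a) = Q^\pi_E(s,a) + F + F_2 + \gamma V^\pi_I(s')$, a branch-by-branch check yields the key dichotomy: for every $\pi$ and every $(s,a)$, if $Q^\pi_E(s,a) < V^\pi_E(s)$ then $\tilde{Q}^\pi_{IE}(s,a) \leq V^\pi_{IE}(s) - \epsilon$, while if $Q^\pi_E(s,a) \geq V^\pi_E(s)$ then $\tilde{Q}^\pi_{IE}(s,a) \geq V^\pi_{IE}(s)$. The $\min / \max$ clipping is precisely what makes both conclusions hold, even when the unclipped adjustment would overshoot. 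This dichotomy is the engine of the rest of the proof and is the only real computation required.

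Step two applies this to any $\pi'$ that is stable and $M'$-Bellman-optimal. For $a$ in the support of $\pi'(\cdot|s)$, Bellman optimality forces $\tilde{Q}^{\pi'}_{IE}(s,a) = V^{\pi'}_{IE}(s)$, which by step one requires $Q^{\pi'}_E(s,a) \geq V^{\pi'}_E(s)$; averaging with the weights $\pi'(a|s)$ collapses this to equality, so $\pi'$ already satisfies the in-support half of extrinsic Bellman optimality. The remaining worry is an out-of-support $\bar{a}$ with $Q^{\pi'}_E(s,\bar{a}) > V^{\pi'}_E(s)$, and this is exactly where Assumption \ref{ass:will_explore} earns its keep. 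Let $\pi_m$ agree with $\pi'$ except that it plays $\bar{a}$ at $s$. A one-step-deviation computation (cleanest via the Bellman operator to avoid revisitation subtleties) gives $V^{\pi_m}_E(s) > V^{\pi'}_E(s)$, hence $Q^{\pi_m}_E(s,a_n) < V^{\pi_m}_E(s)$ for every in-support $a_n$; applying step one to $\pi_m$ then yields $\tilde{Q}^{\pi_m}_{IE}(s,a_n) < V^{\pi_m}_{IE}(s)$, while Case 2 at $\pi'$ gives $\tilde{Q}^{\pi'}_{IE}(s,\bar{a}) \geq V^{\pi'}_{IE}(s)$ — precisely the instability witness forbidden by the assumption, a contradiction. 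The converse direction drops out of step one symmetrically: any candidate instability witness for an extrinsically optimal $\pi^*$ would need an $a_m$ with $\tilde{Q}^{\pi^*}_{IE}(s,a_m) \geq V^{\pi^*}_{IE}(s)$, which by step one forces $a_m$ to be extrinsically optimal as well, destroying the strict-preference leg of the stability definition.

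The main obstacle I anticipate is the policy-dependence of $F_2$. Unlike classical PBRS, where $F$ is a fixed function on $(s,a,s',t)$, here the reward refers back to the learner's own $V^\pi, Q^\pi$, so $M'$ is not literally a fixed MDP and standard Bellman reasoning needs justification. I would handle this by restricting attention to the convergent limit — where $\pi$ can be treated as fixed and $M'$ becomes an honest MDP — which is also precisely the regime in which Assumption \ref{ass:will_explore} is phrased (``upon convergence, only execute stable policies''). A secondary annoyance in the stability step is that redefining the policy at a single state can in principle shift $Q$-values at that state due to revisitation, but this is neatly circumvented by phrasing the deviation as a one-step Bellman operator application rather than a comparison of full stationary values.
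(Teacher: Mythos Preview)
Your proposal is correct and mirrors the paper's own proof: both first derive the same dichotomy (if $Q^\pi_E < V^\pi_E$ then $Q^\pi_{IE} < V^\pi_{IE}$, else $Q^\pi_{IE} \geq V^\pi_{IE}$) by unpacking Equation~\ref{f_2_weak} --- the paper does this via indicator variables $C_1,C_2,C_3$ and the abbreviation $\Omega$, you do it branch-by-branch --- and both then invoke Assumption~\ref{ass:will_explore} with the same one-action-deviation instability witness to pin down the argmax. The only cosmetic difference is framing: you cast it as a two-direction fixed-point argument ($M'$-optimal $\Leftrightarrow$ $M$-optimal among stable policies), whereas the paper establishes $\argmax_a Q^\pi_{IE} = \argmax_a Q^\pi_E$ for all stable $\pi$ and then appeals to Bellman optimality.
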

\begin{proof}
We proceed by first showing that for all possible actions the agent could take, $Q_{IE}^\pi \geq V_{IE}^\pi$ iff $Q_{E}^\pi \geq V_{E}^\pi$, and $Q_{IE}^\pi < V_{IE}^\pi$ iff $Q_{E}^\pi < V_{E}^\pi$. We then, from this, show that for all stable policies under Assumption~\ref{ass:will_explore}, $\argmax_a(Q_{IE}^\pi) = \argmax_a(Q_E^\pi)$.

We begin by rewriting Equation~\eqref{f_2_weak} to be more concise, and removing the cases, while keeping it mathematically equivalent. We define
\begin{align}
    \Omega &= V^\pi_E - Q^\pi_E + V^\pi_I - \gamma V^\pi_{I}(s') - F \label{o_eq}\\
    C_1 &= \mathbbm{1}(Q^\pi_E < V^\pi_E \land \Omega > 0)\label{c1_eq} \\
    C_2 &= \mathbbm{1}(Q^\pi_E \geq V^\pi_E \land \Omega < 0)\label{c2_eq} \\
    C_3 &= \mathbbm{1}(Q^\pi_E < V^\pi_E \land \Omega \leq 0)\label{c3_eq}
\end{align}
where $C_1, C_2,$ and $C_3$ are indicator functions that are 1 if the condition in them is true, and 0 otherwise. We can then rewrite $F_2$ as
\begin{align}\label{concise_weak_f_2}
    F_2 = \Omega -(C_1 + C_2)\Omega -C_3\epsilon.
\end{align}
We can now rewrite $Q_{IE}^\pi$ as 
\begin{align}
    &\dshrink{Q_{IE}^\pi}  \\
    \dshrink{=} &\dshrink{\displaystyle \mathop{\mathbb{E}}_{s'\sim P(s'|a,s)}}\dshrink{[R + F + F_2 + \gamma V^\pi_{IE}(s') ] }\\
    \dshrink{=} &\dshrink{\displaystyle \mathop{\mathbb{E}}_{s'\sim P(s'|a,s)}}\dshrink{[R +V_{IE}^\pi - Q^\pi_E + \gamma V^\pi_{E}(s')  -(C_1 + C_2)\Omega -C_3\epsilon ]}\\
    \dshrink{=} &\dshrink{\displaystyle \mathop{\mathbb{E}}_{s'\sim P(s'|a,s)}}\dshrink{[V_{IE}^\pi -(C_1 + C_2)\Omega -C_3\epsilon]}.\label{final_first_derivation}
\end{align}
When taking the $\argmax$ of this result, the $V_{IE}^\pi$ term drops out due to having no $a$-dependence, and we are left with
\begin{align}
    \dshrink{\argmax_a Q_{IE}^\pi} \dshrink{=} \dshrink{\argmax_a \dshrink{\displaystyle \mathop{\mathbb{E}}_{s'\sim P(s'|a,s)}}[-(C_1 + C_2)\Omega -C_3\epsilon]}.\label{first_ie_argmax_in_proof}
\end{align}

For any given $a$ value, either $Q_E^\pi(a) < V^\pi_E$, or $Q_E^\pi(a) \geq V^\pi_E.$ If $Q_E^\pi(a) < V^\pi_E,$ then $a$ will not be included in $\argmax_a(Q_E^\pi(a))$. Correspondingly, if $Q_E^\pi(a) < V^\pi_E$, then Equations \ref{o_eq}, \ref{c1_eq}, \ref{c2_eq} and \ref{c3_eq} require that $C_2 = 0$, and one of either $-C_1\Omega$ or $-C_3\epsilon$ will be nonzero and strictly negative for these actions, with the other term being zero. Therefore, for actions that are extrinsically worse in expectation than the current policy, the term inside the $\argmax$ of Equation~\ref{first_ie_argmax_in_proof} will be strictly less than zero. Thus, following Equation \ref{final_first_derivation}, if $Q_E^\pi(a) < V^\pi_E$, then $Q_{IE}^\pi(a) < V^\pi_{IE}$.

If $Q_E^\pi(a) \geq V^\pi_E$, however, both the $C_1$ and $C_3$ terms in the $\argmax$ of Equation~\ref{first_ie_argmax_in_proof} will be zero. The $C_2$ term will be strictly positive if $\Omega < 0$, and zero otherwise. This observation combined with Equation~\ref{final_first_derivation} implies that $Q_{IE}^\pi \geq V_{IE}^\pi$. Therefore all actions with $Q_E^\pi(a) < V^\pi_E$ will not be included in $\argmax_a(Q_{IE}^\pi(a))$, as there will always be at least one action with a greater expected $Q^\pi_{IE}(a)$ (the best action sampled from the current policy $\pi$).

We've now shown that, in any given state $s$, any action in $a^* \in \argmax_a(Q_E^\pi(s,a))$ will have $Q_{IE}^\pi(s,a^*)>V_{IE}^\pi(s)$, but we've not yet proven that each $a^*$ will be in $\argmax_a(Q_{IE}^\pi(s))$. To prove this, let's now assume for contradiction that there exists some other action $\bar{a} \notin \argmax_a(Q_E^\pi(s,a))$ such that $Q_{IE}^\pi(s,\bar{a}) > Q_{IE}^\pi(s,a^*)$. We define $\pi_{\bar{a}}$ to be identical to policy $\pi$ except that it takes action $\bar{a}$ in state $s$. We can then prove that $\pi_{\bar{a}}$ is unstable, and thus that, under Assumption~\ref{ass:will_explore}, a converged agent will not follow it. Under this new policy, we can calculate the new Q-function of $a^*$:
\begin{align}
    Q_{IE}^{\pi_{\bar{a}}}(s,a^*) &= \displaystyle \mathop{\mathbb{E}}_{s'\sim P(s'|a,s)}V_{IE}^{\pi_{\bar{a}}} -(C_1 + C_2)\Omega -C_3\epsilon).
\end{align}
Because, by definition, $V_E^{\pi_{\bar{a}}}(s)<Q_E^{\pi_{\bar{a}}}(s, a^*)$ (as otherwise $\bar{a}$ would be an extrinsically optimal action in $s$), we know that both the $C_1$ and $C_3$ terms must equal zero, leaving only the $C_2$ term, which is always positive if nonzero. This proves that $V_{IE}^{\pi_{\bar{a}}}\leq Q_{IE}^{\pi_{\bar{a}}}(s, a^*)$. That is one of the two conditions required for $\pi_{\bar{a}}$ to be unstable.

On the other hand, if the agent follows policy $\pi_{a^*}$, defined as a policy that's identical to $\pi$ except that it takes action $a^*$ in state $s$, the Q-function of taking action $\bar{a}$ becomes
\begin{align}
    Q_{IE}^{\pi_{a^*}}(s,\bar{a}) &= \displaystyle \mathop{\mathbb{E}}_{s'\sim P(s'|a,s)}V_{IE}^{\pi_{a^*}} -(C_1 + C_2)\Omega -C_3\epsilon).
\end{align}
In this formulation, since we know that $V_E^{\pi_{a^*}}(s)>Q_E^{\pi_{a^*}}(s, \bar{a})$, we know that the $C_2$ term must be zero, while one of either the $C_1$ or $C_3$ terms must be strictly negative, with the other being zero. This implies that  $Q_{IE}^{\pi_{a^*}}(s,\bar{a}) <  V_{IE}^{\pi_{a^*}}(s)$: the other condition required for $\pi_{\bar{a}}$ to be unstable.

Because action $a^*$ is strictly better than $\bar{a}$ when the agent is following policy $\pi_{a^*}$, and no worse when the agent is following policy $\pi_{\bar{a}}$, it follows that it is strictly better when following any nontrivial mixture of the two policies, and that $\pi_{\bar{a}}$ is an unstable policy. From Assumption~\ref{ass:will_explore}, we know that the learning algorithm will never converge to such a policy. Thus we have a contradiction: in convergence, there exists no policy for which  $Q_{IE}^\pi(s,\bar{a}) > Q_{IE}^\pi(s,a^*)$ that is not in $\argmax_a(Q^\pi_{E})$, and thus every action in $\argmax_a(Q^\pi_{IE})$ is also in $\argmax_a(Q^\pi_{E})$. 

A similar line of reasoning goes the other direction. Given an action $a^* \in \argmax_a(Q_{IE}^\pi(s,a))$ and another action $\bar{a} \notin \argmax_a(Q_{IE}^\pi(s,a))$ with $Q_{E}^\pi(s,\bar{a}) > Q_{E}^\pi(s,a^*)$, we find that $Q_{IE}^{\pi_{a^*}}(s,\bar{a}) < V_{IE}^{\pi_{a^*}}$ and $Q_{IE}^{\pi_{\bar{a}}}(s,a^*) \geq V_{IE}^{\pi_{\bar{a}}}$, thus concluding that this $\bar{a}$ is unstable as well.

When dealing with convergent policies then, given Assumption~\ref{ass:will_explore}, we must conclude 
\begin{align}\label{thing_to_prove}
 \argmax_a(Q_{IE}^\pi) = \argmax_a(Q_{E}^\pi) \quad \forall \pi.
\end{align}

Because we can ensure that the learned value functions and action-value functions using the full reward will induce the same optimal actions comparing to the extrinsic reward for any converged policy, the training process will align with the Bellman optimality equation~\cite{peng1993convergence} using the extrinsic reward only. Therefore, upon convergence, the underlying learning algorithm using the ADOPS shaped reward will subsequently produce an optimal policy under the extrinsic reward, and Equation \ref{good_objective} holds.

\end{proof}

This proof so far has been for an $F_2$ wherein we have some access to perfect estimations of state and action values. To extend our framework to function approximations of state and action values, we apply an additional assumption that the base learning algorithm for updating different value estimations convergences. In other words, under the premise that the approximation error is bounded by $\varepsilon$, the value estimations are bounded by a factor $C_{\varepsilon}$ with regards to the approximation error~\citep{agarwal2019reinforcement, jin2023stationary}. With our proven optimality-preserving property then, the underlying learning algorithm with the shaping reward from ADOPS, in the worst-case scenario, shares the same convergence property as the original version of the problem, without utilizing the ADOPS reward.
\section{Empirical Results}\label{sec:empirical}

We test ADOPS, as well as prior optimality-preserving reward shaping methods in the Montezuma's Revenge \citep{bellemare2013arcade} Atari Learning Environment~(ALE) with RND IM \citep{burda2019exploration}. We find that PBIM, GRM, and PIES all fail to converge to a policy that outperforms the policy trained on the baseline IM. We tested several versions of ADOPS in this environment and found that all versions tested achieve higher performance than the baseline IM. We provide details of our experiments in Appendix \ref{experimental-setup}.

\begin{figure}[t]
\includegraphics[width=8cm]{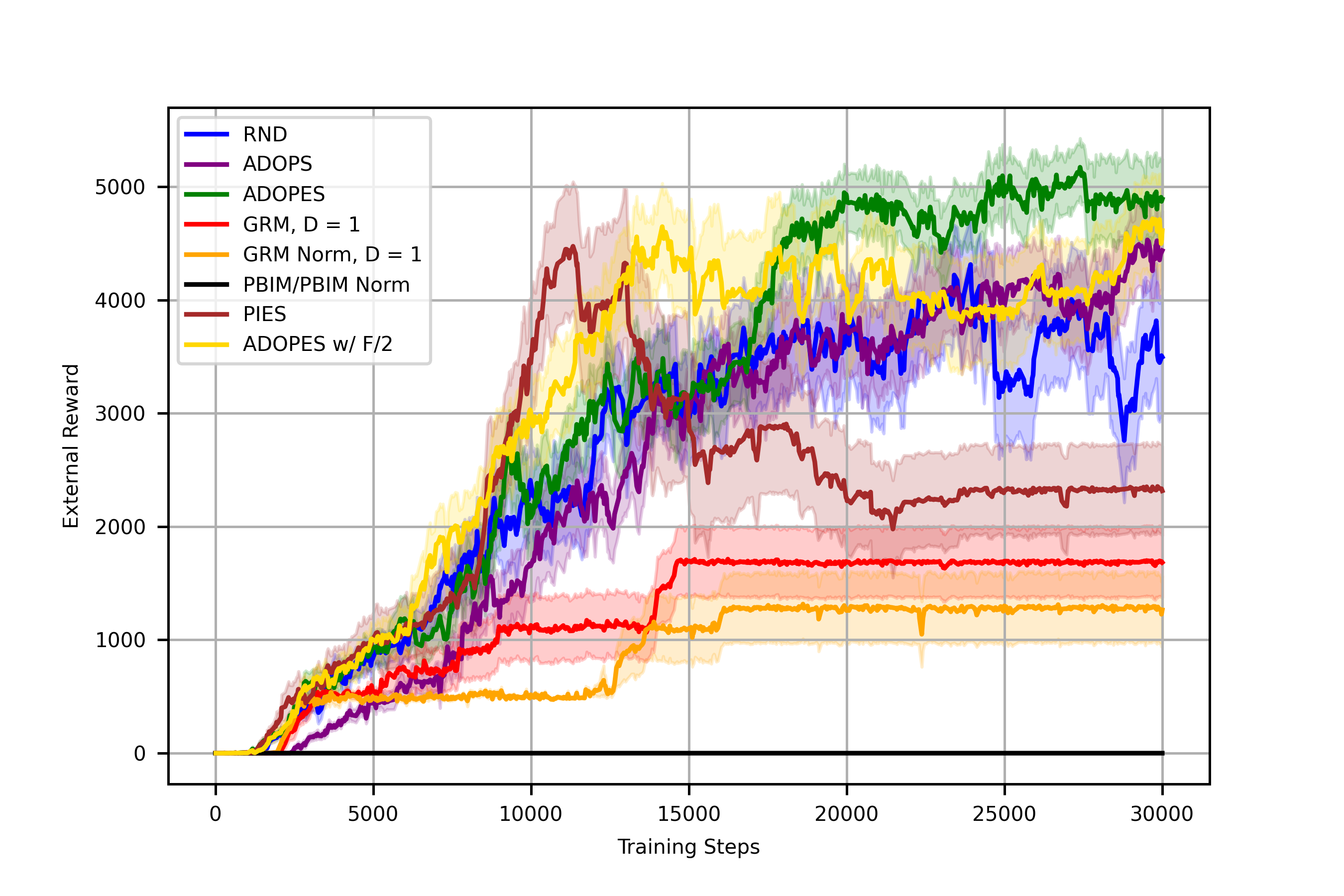}
\centering
\caption{Comparison of all methods in Montezuma's Revenge. All plots are smoothed over the past 10 steps.
Error bars are errors on the mean. Differences between RND and both normalized and non-normalized GRM are statistically significant, with $p = 0.009$ and $p = 0.031$, respectively. Differences between RND and PIES are nearly, but not quite statistically significant, with $p = 0.059$. Differences between normalized and non-normalized GRM are not statistically significant, with $p=0.37$ (two-sided t-test). ADOPES is better than RND to a statistically significant degree, $p= 0.038$. ADOPS, ADOPES, and ADOPES w/ $F/2$ all improve on PIES to a statistically significant degree, with $p=4.4e-5$, $p=2.4e-6$, and $p=6.4e-5$, respectively (two-sided t-test). They similarly improve over GRM and PBIM. $N = 10$ for each GRM run, $N=1$ for each PBIM run, and $N = 20$ for all other methods.}
\label{all_runs}
\end{figure}

\subsection{PBIM}\label{sec:pbimex}

We find that PBIM with RND, whether normalized or non-normalized, fails to ever obtain nonzero extrinsic rewards in Montezuma's Revenge. PBIM almost immediately saturates the agent's action probabilities in this environment (the average probability of the action being taken). We find that the culprit is PBIM's reliance on accounting for \textit{all} its intrinsic rewards at the end of the episode, combined with Montezuma's Revenge's long episode lengths. This is due to the exponential nature of the denominator in the final reward for an episode under PBIM: 
\begin{align}
    F'_{N-1} = -\frac{U_0}{\gamma^{N-1}},
\end{align}
where $U_0$ is the discounted return of the previous intrinsic rewards obtained throughout the episode. When training in smaller environments, such as the cliff walker and grid world experiments of \cite{pbim}, this reward is reasonable, and effectively speeds up training. In environments like Montezuma's Revenge, however, with potentially very long episodes and relatively low $\gamma$ values, the denominator in this reward becomes unwieldy, and the end-of-episode `penalty' term exponentially explodes, preventing further learning. We investigate and discuss this finding further in Appendix \ref{pbim_fails}.

\subsection{GRM}

To compare our method to GRM, we chose the same subset of GRM methods recommended in \cite{grm}, parameterized by the delay hyperparameter $D$ in Equation~\eqref{hyperparam_tuning}. 
To determine for which value for $D$ GRM fares best in this environment, we tested single runs across a wide range of $D$ values. We detail the results of this hyperparameter test in Appendix \ref{hyperparam_appendix}. We find that lower $D$ values are consistently better, keeping in line with our results from Section \ref{sec:pbimex}, and further suggesting that too-long delays between when an intrinsic reward is received by the agent and accounted for can cause an exponential explosion. Our best-performing candidate used $D=1$.
We therefore tested both normalized and non-normalized versions of this equation as our representative tests of GRM. 


As can be seen from Figure~\ref{all_runs}, all versions of GRM or PBIM failed to reach the same average cumulative extrinsic reward as RND, to a statistically significant degree. Additionally, non-normalized GRM seems to outperform normalized GRM. Though this difference isn't statistically significant, we hypothesize that it is because our environment's extrinsic reward is strictly positive, which tends to make longer episodes more likely to have extrinsically higher returns, and thus a bias toward prolonging the episode like that described in \cite{pbim} is not harmful in the same way as in their experiments, which are in environments where the optimal policy is to reach a goal state as quickly as possible. Thus, the normalization's effect of mitigating this bias may not be as useful here.

\subsection{PIES}

We tested PIES with a $\zeta$ decay rate $\frac{1}{C} = \frac{1}{15000}$. We chose this rate to strike a balance between giving the agent enough time with intrinsic rewards to explore the environment and giving it enough time without intrinsic rewards to converge to an optimal policy. With this decay rate, PIES ``starts'' conserving the optimal policy, in theory, at exactly the halfway point in training, as this is the point at which it begins to return an IM of zero. We also modified the PIES algorithm slightly to suit our larger training environment: we update the value of the $\zeta$ coefficient every iteration, rather than every episode, as we are training multiple agents in parallel, and with highly variant episode lengths from one episode to another. We also modified PIES by using it to shape the coefficient of IM, rather than a potential-based Markovian shaping reward, as it was used in \cite{behboudian2022policy}: we consider this itself to be a novel extension of the PIES methodology, albeit a simple one.

Our PIES results are also plotted in Figure~\ref{all_runs}. Note that, while PIES performs well initially, it decreases in performance rapidly upon approaching the halfway point of training, and never recovers: in other words, its performance worsens as soon as PIES begins to approach conserving the optimal policy set of the underlying environment. This suggests a clear trade-off: PIES can either conserve the optimal policy of the underlying environment, or allow for IM to be used effectively, but struggles to do both simultaneously. Also of note is that the multiplicative coefficient for the IM we used for our RND runs (and thus the effective value of $\zeta$ at the first iteration for our PIES runs) was simply $1$, as in the original RND paper \cite{burda2018exploration}. As that paper didn't test different values for this hyperparameter, the initially good performance of PIES in Figure~\ref{all_runs} suggests that the best value of this coefficient is likely lower than the default value used. In the first half of Figure~\ref{all_runs}, after all, PIES is essentially equivalent to RND as implemented in the same plot, but with a lower and steadily decreasing multiplicative coefficient for the intrinsic rewards. In Section~\ref{sec:goodex}, we present more results that suggest this is indeed the best explanation for PIES's good early performance.

\subsection{ADOPS}\label{sec:goodex}

We test three versions of our method. The first, our baseline ADOPS method, simply implements Equation~\eqref{f_2_weak} using the critic networks' estimations of the relevant quantities. 

Noting that these critics' estimations are much better later on in training than earlier, we also implement a fusion of our method with PIES, which we call Action Dependent Optimality Preserving Explicit Shaping (ADOPES). It is equivalent to ADOPS, except with a $\zeta$ coefficient multiplied by $F_2$ that begins at $0$, then scales \textit{up} throughout training, to a final value of 1, rather than down. This means that PIES and ADOPES both begin identically, giving the agent unfettered access to the base IM at the start of training. However, while PIES linearly scales this reward down throughout training until the agent is receiving no IM at all, ADOPES instead scales up the coefficient by which $F_2$ is multiplied, thus preserving the optimal policy in a way that still gives the agent access to IM during the latter half of training. 

Finally, inspired by the early success of PIES in Figure~\ref{all_runs}, to see what extent lowering the starting multiplicative coefficient for IM had on the training speed, we also trained on a version of ADOPES with the starting IM coefficient lowered to $.5$, or half of what it was in the original RND paper or our other experiments.

All our ADOPS-based runs are included in Figure~\ref{all_runs}. All of our methods outperform all prior optimality-preserving work in terms of the average extrinsic return of the final policy to a statistically significant degree, and ADOPES also outperforms RND to a statistically significant degree, as well as being the best-performing method overall. Our modified version of ADOPES with a lowered IM coefficient learned more quickly than the other runs, supporting our explanation that PIES's promising results earlier on in training were the result of incidentally finding better values for the IM scaling coefficient along the way to zero. However, unlike PIES, ADOPES allows the agent to keep receiving intrinsic rewards past the halfway point of training and thus can maintain its ability to receive higher extrinsic returns throughout training.


\section{Conclusion}
We present ADOPS, a novel plug-and-play method for implementing shaping rewards while preserving the optimal policy of the underlying environment. We prove this optimality-preserving property, as well as the lack of several previously-necessary assumptions, and demonstrate SOTA performance in a difficult environment, as compared with other optimality-preserving methods.

\section*{Acknowledgments}
Thanks to Nick Gauthier for help setting up and debugging the relevant python packages. 
\section*{Impact Statement}

This paper presents work whose goal is to advance the field of 
Machine Learning. There are many potential societal consequences 
of our work, none which we feel must be specifically highlighted here.

\bibliography{main}
\bibliographystyle{icml2025}

\newpage
\appendix
\onecolumn
\section{Experimental Details}
\label{experimental-setup-and-additional-experiments}
Here, we detail more about our experiments.
\subsection{Experimental Setup}
\label{experimental-setup}
Montezuma's Revenge ALE environment is a challenging sparse-rewards environment with a history of being used as a benchmark for IM methods \cite{burda2018large,burda2019exploration}. We used the environment unmodified. Our base model for all trained agents was the PPO algorithm~\cite{schulman2017proximal}, with additional intrinsic rewards from the RND network~\cite{burda2019exploration}. We use the same hyperparameters as the 32-worker Convolutional Neural Net (CNN) runs in \cite{burda2019exploration}, with one key difference: while \cite{burda2019exploration} clips external rewards to the interval $[-1,1]$, we train on the external reward as-is because we are primarily concerned with methods that preserve the optimal policies of the underlying environment, and this clipping might alter that set (for example, by eliminating the effective difference between a reward of $1,000$ and $2,000$, and thus changing whether it is strictly preferred for the agent to collect the latter reward first). To compensate for this change, we scaled the extrinsic reward down by a factor of $1000$, to keep the expected order of magnitude of extrinsic rewards identical to that in \cite{burda2019exploration}.\footnote{We found that without this scaling factor, the agent performed significantly worse. This scaling factor is considered separately from the scaling factor discussed in Section \ref{sec:empirical} in the context of PIES and ADOPES, as it is for extrinsic rewards, whereas that factor is for IM.} For methods requiring normalization, such as PBIM Norm, we used exponential smoothing with $\alpha = 0.05$. For all ADOPS variants, we used $\epsilon = 1e-7$.

We use the implementation in PPO-RND \cite{pytorchrnd} as an initial codebase. PPO-RND does not have an active license. We conducted experiments using gymnasium API simulated by Arcade Learning Environment (ALE) \cite{bellemare2013arcade} for Montezuma's Revenge. We use "ALE/MontezumaRevenge-v4" in our experiments. Gymnasium API is under the MIT license and ALE is under GPL-2.0 license.

We conducted our experiments on two servers with Ubuntu 22.04. One server has 12 Intel(R) Xeon(R) CPU E5-1650, 2 NVIDIA GeForce GTX 1080, and 32 GiB memory. We run two experiments concurrently on it. The other server has 12 AMD EPYC 7401P CPU, 1 NVIDIA TITAN RTX, and 30 GiB memory. Each run takes around 30 hours.

\subsection{Failure of PBIM To Obtain Rewards}\label{pbim_fails}
As can be seen in Figure~\ref{act_prob_plot}, PBIM and PBIM Norm immediately saturate the action probability, preventing the agent from ever obtaining nonzero extrinsic rewards. This is contrasted with action probabilities for a run of RND, which demonstrates a typical action probability for successful methods in this environment at the beginning of training. As the PBIM methods immediately converge to an action probability of 1, they never explore the environment, and thus never obtain any extrinsic rewards.\footnote{In both cases, the particular action being taken at every time step by the agent is to not move at all.}
\begin{figure}[t]
\includegraphics[width=8cm]{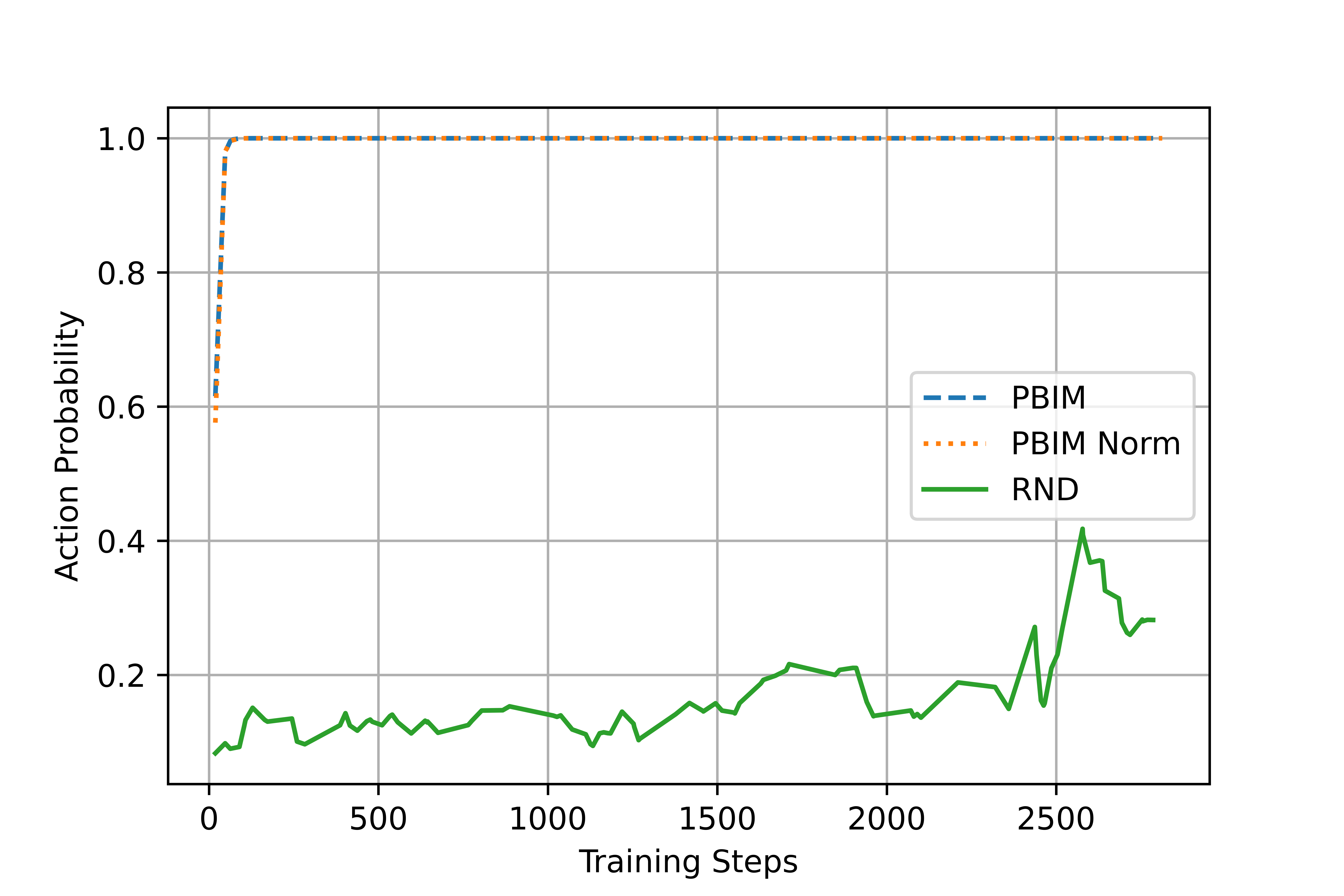}
\centering
\caption{Action probabilities for RND and PBIM.}
\label{act_prob_plot}
\end{figure}
 We use the same maximum episode length and intrinsic discount $\gamma_I$ values as \cite{burda2019exploration}, which are $4,500$ and $.99$, respectively. As an order-of-magnitude calculation, $\frac{1}{.99^{4500}} \approx 10^{19}$, implying that we should expect the intrinsic rewards in the final time steps of the longest episodes in this environment with PBIM to be around this large. Indeed, through inspection, we did observe rewards of this magnitude under both PBIM and PBIM-Norm, and it was these rewards that saturated the action probability in Figure \ref{act_prob_plot} and thus prevented learning. Thus, we find that neither PBIM method is suitable to be used in environments with potentially long episode lengths and low $\gamma$ values, but should instead be limited to environments with shorter episodes and long time horizons within those episodes.\footnote{For reference, the longest episode length in an environment wherein PBIM successfully sped up training performance in \cite{pbim} was 640 time steps, and the discount factors tested ranged from $.99$ to $.995$.}
\subsection{Hyperparameter Test For D in GRM}\label{hyperparam_appendix}

 We plot the results of our GRM $D$ hyperparameter test in Figure~\ref{hp_sweep}. We ran these tests using the normalized versions of GRM, taking into account the results of \cite{grm} suggesting that this makes a wider range of $D$ values robust against introducing unwanted biases from the data. 
\begin{figure}[t]
\includegraphics[width=8cm]{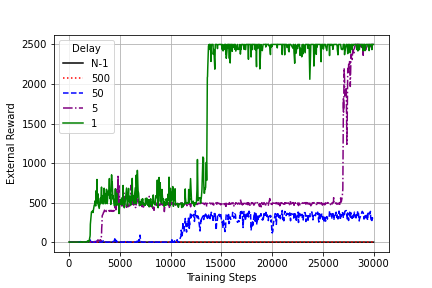}
\centering
\caption{Extrinsic returns for runs with various values of $D$ in Equation~\eqref{hyperparam_tuning}. All runs are normalized. The run with $D = N-1$ is equivalent to PBIM Norm from~\cite{pbim}.}
\label{hp_sweep}
\end{figure}

\section{Additional Proofs}

Here, we detail additional proofs.

\subsection{Proof Regarding GRM's Lack of Generality} \label{ADGRM_proofs}
Here, we prove that there is a relevant set of optimality-preserving shaping functions that cannot be accommodated by GRM or PBRS.\footnote{These are equivalent sets of functions, as proved in \cite{grm}.} 
\begin{theorem}\label{anti_grm_theorem}
    There exist optimality-preserving reward shaping functions which cannot be written as GRM shaping functions.
\end{theorem}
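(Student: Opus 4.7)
The plan is to exhibit an explicit counterexample: a simple MDP together with an action-dependent shaping reward that preserves optimality yet provably cannot be expressed in the GRM form from Equation~\eqref{drawer}. The separating property I will exploit is that any GRM shaping function has total discounted sum equal to zero along every fixed trajectory of length $N$, which is a direct consequence of the fully-matching condition.

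First, I would establish the identity $\sum_{t=0}^{N-1} \gamma^t F^{\text{'GRM}}_t = 0$ along any trajectory $(s_0, a_0, \ldots, s_{N-1}, a_{N-1})$ for every $(F, m)$ satisfying fully-matching. This follows by substituting Equation~\eqref{drawer}, swapping the order of summation in the correction term, and applying $\sum_{t=i}^{N-1} m_{t,i} = 1$ to collapse the resulting double sum to $\sum_i \gamma^i F_i$, which exactly cancels the leading $\sum_t \gamma^t F_t$ term. This invariant is independent of the future-agnostic condition and holds for every admissible GRM instantiation.

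Next, I would construct the counterexample. Take a single-state MDP with state $s$ and two actions $\{a_1, a_2\}$ where $R(s, a_1) = 1$ and $R(s, a_2) = 0$; the unique optimal policy is to always play $a_1$. Define the action-dependent Markovian shaping $G(s, a_1) = c$, $G(s, a_2) = 0$ for a fixed constant $c > 0$. A direct Bellman-equation computation gives $Q^{*}_{M'}(s, a_1, t) - Q^{*}_{M'}(s, a_2, t) = 1 + c > 0$ at every time $t$, so $G$ preserves optimality in the sense of Equation~\eqref{good_objective}. Yet on the optimal trajectory, $\sum_{t=0}^{N-1} \gamma^t G(s, a_1) = c(1 - \gamma^N)/(1 - \gamma) > 0$ for any $\gamma \in [0, 1)$, contradicting the identity above. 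Hence $G$ cannot be realized as $F^{\text{'GRM}}$ for any admissible $(F, m)$ pair, which proves the theorem.

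The argument is short, so there is no serious obstacle; the main conceptual step is identifying the right separating invariant, namely that the discounted trajectory sum of any GRM shaping reward is identically zero, and then checking that even a Markovian action-dependent constant shaping already violates it. Making the counterexample action-dependent is intentional, as it directly illustrates the paper's broader claim that GRM is too restrictive to accommodate optimality-preserving shaping functions whose cumulative returns depend on the agent's actions.
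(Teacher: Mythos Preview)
Your proof is correct and takes a genuinely different route from the paper's. The paper chooses $F'_t = R_t$ as its counterexample and then invokes the general GRM identity (from \cite{grm}) that the intrinsic return $U_t^I$ at any time $t$ depends only on $F_0,\ldots,F_{t-1}$; combined with the future-agnosticity assumption on $F$ (Equation~\eqref{assumption1}), this forces $U_t^I$ to be independent of $a_t$, contradicting $U_t^I = U_t^E$ in any nontrivial MDP. You instead isolate the cleaner $t=0$ special case of that identity, namely $\sum_{t=0}^{N-1}\gamma^t F^{\text{'GRM}}_t = 0$ along every trajectory, and verify it directly from Equation~\eqref{drawer} using only the fully-matching condition; then a single-state, two-action MDP with a constant action-dependent bonus gives a nonzero discounted sum and finishes the contradiction. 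Your argument is more elementary and self-contained (it does not appeal to Equation~\eqref{assumption1} or to results imported from \cite{grm}), while the paper's counterexample has the advantage of being environment-agnostic and of dovetailing with the later observation that the ideal ADOPS of Equation~\eqref{f_2_strong} leaves $F'_t = R_t$ untouched.
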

\begin{proof}
In a given MDP, take the shaping function $F'_t = R_t$. Trivially, this preserves the optimal reward. For a proof by contradiction, we assume there exists some $m_{t,t'}, F_t$ such that this can be written as a GRM shaping reward in the form of Equation~\eqref{drawer}. Taking the intrinsic return at time $t$, we get
\begin{align}
    U_t^I &= \sum_{j = t}^{N-1} \gamma^{j-t}F'_j \\
    &= \sum_{j=t}^{N-1}\gamma^{j-t}F_j - \sum_{j = t}^{N-1}\sum_{i = 0}^{j}\gamma^{i-t} F_{i}m_{j,i},
\end{align}
where $ U_t^{I_{old}}$ is the cumulative discounted sum of $F_t$. We know from the proof for GRM's optimality in \cite{grm} that this entails
\begin{equation}
   U_t^I = - \sum_{i = 0}^{t-1} \gamma^{i-t}F_{i} +\sum_{i = 0}^{t-1} \gamma^{i-t}F_{i}\Biggl(\sum_{j=i}^{t-1} m_{j,i}\Biggr).
\end{equation}
Since all $F_{t'}$ must be future-agnostic (Equation~\eqref{assumption1}), we can apply Equation~\eqref{assumption1} to assert that $U_t^I$ has no $a_t$-dependence. But this is not generally true, and certainly false in any nontrivial environment: because we've defined $F_t'$ such that $U_t^I = U_t^E$, this would imply that in every environment, the agent actions cannot influence its return. We have thus derived a contradiction, and so our assumption that GRM can accommodate $F_t' = R_t$ was false.
\end{proof}
To further understand the implications of this proof, examine what would happen if we were to plug this counterexample shaping function $F'_t = R_t$ into the GRM framework: it would convert it into a shaping reward that is, indeed, action-independent. In doing so, it would discard much of the useful information contained in that reward.

In contrast with GRM or other PBRS-based methods, ADOPS can easily accommodate reward shaping functions of this type: in fact, the theoretical best ADOPS method we define in Equation~\eqref{f_2_strong} correctly identifies this shaping function as not disrupting the optimal policy to begin with, and returns it back unchanged, with $F'_t = F_t$. 

\subsection{A Proof of Optimality of the ``Ideal'' ADOPS Shaping Function}\label{ideal_adops_proof_section}
Given some initial shaping reward $F$, about which we make no assumptions, we define the ideal (usually inaccessible) ADOPS shaping reward to be
\begin{equation} \label{f_2_strong_extras}
    F_2 = \begin{cases} \min (0, V^*_E - Q^*_E + V^*_I - \gamma V^*_I(s') - F - \epsilon) & \text{if} \quad Q^*_E < V^*_E \\ \max (0, V^*_E - Q^*_E + V^*_I - \gamma V^*_I(s') - F ) & \text{if} \quad Q^*_E \geq V^*_E.
    
    \end{cases}
\end{equation}

Here, we are defining $V^*_I$ such that it represents the \textit{maximum} intrinsic reward achievable while following an externally optimal policy. While for any extrinsically optimal policy, it will always be true that $V_E^*$ is equivalent to that of any other extrinsically optimal policy, they may differ in terms of which achieves higher intrinsic reward: when this is the case, we take $V_I$ to be the maximum of these.

We prove that adding this reward term to any initial $F$ preserves optimality.
\begin{theorem}
    An intrinsic reward of the form $F' = F + F_2$ with $F_2$ defined according to Equation \ref{f_2_strong_extras} will preserve the set of optimal policies.
\end{theorem}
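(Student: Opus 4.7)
The plan is to show that under the ideal ADOPS definition of $F_2$, every extrinsically optimal action satisfies $Q^*_{IE}(s,a) = V^*_{IE}(s)$ while every extrinsically suboptimal action satisfies $Q^*_{IE}(s,a) < V^*_{IE}(s)$. Together these two facts are exactly Equations~\ref{equality_condition} and~\ref{inequality_condition}, which were already shown to be equivalent to the optimal-policy-set preservation condition in Equation~\ref{good_objective_2}. A welcome feature is that we can avoid the stability assumption used in the practical version of the theorem, since we are working directly with true optimal value functions rather than with critic estimates that depend circularly on the current policy.

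First I would expand $Q^*_{IE}(s,a) = Q^*_E(s,a) + F + F_2 + \gamma V^*_I(s')$ and introduce the shorthand $\Omega = V^*_E - Q^*_E + V^*_I - \gamma V^*_I(s') - F$, so that $Q^*_{IE}(s,a) = V^*_{IE}(s) - \Omega + F_2$. Both branches of Equation~\ref{f_2_strong_extras} can then be read off uniformly in terms of the sign of $\Omega$.

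For the extrinsically suboptimal branch ($Q^*_E < V^*_E$), the definition gives $F_2 = \min(0,\Omega-\epsilon)$. If $\Omega > \epsilon$ then $F_2 = 0$ and $Q^*_{IE} = V^*_{IE} - \Omega < V^*_{IE} - \epsilon$; if $\Omega \leq \epsilon$ then $F_2 = \Omega - \epsilon$ and $Q^*_{IE} = V^*_{IE} - \epsilon$. Either way $Q^*_{IE}(s,a) < V^*_{IE}(s)$, delivering the strict inequality demanded by Equation~\ref{inequality_condition}. The role of the $\epsilon$ offset is simply to kill the edge case in which an extrinsically suboptimal action would otherwise tie $V^*_{IE}$ after shaping.

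The main obstacle, and the only step that actually exploits the precise definition of $V^*_I$ as the maximum intrinsic return over extrinsically optimal policies, is the extrinsically optimal branch ($Q^*_E = V^*_E$). I would argue $\Omega \geq 0$ by a one-step lookahead: the policy that takes the extrinsically optimal action $a$ at $s$ and then follows the intrinsically-best extrinsically-optimal continuation from $s'$ is itself extrinsically optimal from $s$, so its intrinsic return $F + \gamma V^*_I(s')$ cannot exceed the maximum $V^*_I(s)$. Rearranging gives $V^*_I - \gamma V^*_I(s') - F \geq 0$, and combined with $V^*_E - Q^*_E = 0$ this forces $\Omega \geq 0$. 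Hence $F_2 = \max(0,\Omega) = \Omega$ and $Q^*_{IE}(s,a) = V^*_{IE}(s)$, securing Equation~\ref{equality_condition}. Combining the two cases, $\max_a Q^*_{IE}(s,a) = V^*_{IE}(s)$ is attained precisely by the extrinsically optimal actions, so $\argmax_a Q^*_{IE} = \argmax_a Q^*_E$ at every $(s,t)$. Propagating this pointwise argmax equality through the usual Bellman backward-induction argument then yields preservation of the full optimal policy set.
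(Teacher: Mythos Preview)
Your direct strategy---a case split on the sign of $\Omega$ to show that $Q^*_{IE}(s,a^*)=V^*_{IE}(s)$ for every extrinsically optimal $a^*$ and $Q^*_{IE}(s,\bar a)<V^*_{IE}(s)$ for every suboptimal $\bar a$---is cleaner than the paper's route, which argues by contradiction and decomposes the expectation over $s'$ into the portion where the $\min$/$\max$ in $F_2$ is active and the portion where it is not (the $P_0$ decomposition). Your suboptimal branch is correct: the pointwise bound $Q^*_{IE}\le V^*_{IE}-\epsilon$ passes straight through the expectation over $s'$, and you are right that no stability assumption is needed here.

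There is, however, a genuine gap in the lookahead step for the optimal branch. The intrinsic return of the policy you describe is $\mathbb{E}_{s'}\bigl[F+F_2+\gamma V^*_I(s')\bigr]$, not $F+\gamma V^*_I(s')$: you dropped the $F_2$ contribution (the shaped intrinsic reward is $F'=F+F_2$, not $F$), and you conflated a single realization with its expectation. Taken literally, your argument asserts $V^*_I(s)\ge F+\gamma V^*_I(s')$ for \emph{every} successor $s'$, which the maximality of $V^*_I$ does not deliver and which can fail for general $F$ in a stochastic environment. The repair is short: the lookahead actually yields $\mathbb{E}_{s'}[\Omega-F_2]\ge 0$; since $\Omega-\max(0,\Omega)=\min(0,\Omega)\le 0$ pointwise, the two together force $\min(0,\Omega)=0$ almost surely, hence $\Omega\ge 0$ a.s.\ and $F_2=\Omega$, recovering your conclusion. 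The paper sidesteps this by never claiming $\Omega\ge 0$ pointwise: it obtains $Q^{\pi^*}_{IE}\ge V^*_{IE}$ from the $P_0$ decomposition and $Q^{\pi^*}_{IE}\le V^*_{IE}$ from the same maximality of $V^*_I$ that you invoke, then squeezes to equality.
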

\begin{proof}
    Proof by contradiction. To prove that optimality is preserved, it suffices to prove that Equations \ref{equality_condition} and \ref{inequality_condition} hold for any trajectory. Let us assume that there exists some $\bar{\pi}$ such that
    \begin{align}
        Q^{\bar{\pi}}_E < V^*_E \quad & \quad Q^{\bar{\pi}}_E + Q^{\bar{\pi}}_I \geq V^*_E + V^*_I.
    \end{align}
    This is equivalent to assuming that Equation \ref{inequality_condition} does not hold. We can then write the quantity $Q^{\bar{\pi}}_E + Q^{\bar{\pi}}_I$ as
    \begin{align}
        Q^{\bar{\pi}}_E + Q^{\bar{\pi}}_I &= Q^{\bar{\pi}}_E + \mathop{\mathbb{E}} (F + F_2 + \gamma V^{\bar{\pi}}_{I}(s')).
    \end{align}
    From the first component of our assumption, we know that $F_2$ is going to be defined according to the topmost case of Equation \ref{f_2_strong_extras}. We can then decompose our expression into contributions from terms wherein $F_2 = 0$ (from the $\min$ function), and those in which it does not. We can define 
    \begin{align}
        &Q^{\bar{\pi}}_E + Q^{\bar{\pi}}_I \\ =&P_0(Q^{\bar{\pi}}_{E} + Q^{\bar{\pi}}_{I}) + (1-P_0)(Q^{\bar{\pi}}_{E} + Q^{\bar{\pi}}_{I}) \\
        =&P_0(Q^{\bar{\pi}}_{E} + \mathop{\mathbb{E}}(F_0 + \gamma V^{\bar{\pi}}_{I,0}(s')))\\ +&(1-P_0)(Q^{\bar{\pi}}_{E} + \mathop{\mathbb{E}}(F_{\neq 0} + F_{2,\neq0} + \gamma V^{\bar{\pi}}_{I,\neq 0}(s'))) \label{f_2_mid_1}
    \end{align}
    where $P_0$ is a scalar representing the proportion of trajectories from the relevant state in the relevant action in which $F_2 = 0$, and the subscripts $0$ and $\neq0$ represent the contributing terms in which $F_2=0$ and $F_2 \neq 0$, respectively.

    For the terms with the $\neq 0$ subscript, we find
    \begin{align}
        &Q^{\bar{\pi}}_{E} + \mathop{\mathbb{E}} (F_{\neq 0} + F_{2,\neq 0} + \gamma V^{\bar{\pi}}_{I,\neq 0}(s')) \\ =&Q^{\bar{\pi}}_{E} + \mathop{\mathbb{E}} (F_{\neq 0} + V^*_E - Q^{\bar{\pi}}_E + V^*_I - \gamma V^{\bar{\pi}}_{I, \neq 0}(s') - F_{\neq 0} - \epsilon + \gamma V^{\bar{\pi}}_{I, \neq 0}(s')) \\ 
        =&Q^{\bar{\pi}}_E + \mathop{\mathbb{E}} ( V^*_E - Q^{\bar{\pi}}_E + V^*_I - \epsilon ) \\
        =&V^*_E + V^*_I - \epsilon.
    \end{align}
    We also know from the condition in \ref{f_2_strong_extras} that 
    \begin{align}
        V^*_E - Q^{\bar{\pi}}_E + V^*_I - \gamma V^{\bar{\pi}}_{I,0}(s') - F_0 - \epsilon & \geq 0 \\
        Q^{\bar{\pi}}_E + \gamma V^{\bar{\pi}}_{I,0}(s') + F_0 &\leq V^*_E + V^*_I - \epsilon \\
        Q^{\bar{\pi}}_E + \mathop{\mathbb{E}} (\gamma V^{\bar{\pi}}_{I,0}(s') + F_0) &\leq V^*_E + V^*_I - \epsilon,
    \end{align}
    which gives us an upper bound for the terms with the $0$ subscript.
    
    Taking our assumption, then, we can replace both components of Equation \ref{f_2_mid_1} with their respective reductions, which gives us
    \begin{align}
    V^*_E + V^*_I &\leq   Q^{\bar{\pi}}_E + Q^{\bar{\pi}}_I  \\
    &\leq P_0(V^*_E + V^*_I - \epsilon) + (1-P_0)(V^*_E + V^*_I - \epsilon) \\
    &\leq V^*_E + V^*_I - \epsilon.
    \end{align}
    This is a contradiction, and thus we have proven that Equation \ref{inequality_condition} holds.

    To prove Equation~\ref{equality_condition}, we similarly assume that there exists some extrinsically optimal $\pi^*$ such that
    \begin{align}
        Q^{\pi^*}_E = V^*_E \quad & \quad Q^{\pi^*}_E + Q^{\pi^*}_I \neq V^*_E + V^*_I.\label{2nd_assumption}
    \end{align}
    The second component of our assumption includes two possible cases:
    \begin{align}\label{f_2_2_assumption_cases_neq}
        Q^{\pi^*}_E + Q^{\pi^*}_I < V^*_E + V^*_I \quad \text{or} \quad Q^{\pi^*}_E + Q^{\pi^*}_I > V^*_E + V^*_I.
    \end{align}
    Let's begin by assuming the first of these. We can again decompose $Q^{\pi^*}_E + Q^{\pi^*}_I$ into two parts, using the constant $P_0$ to denote the percentage of contributing trajectories that are set to zero:
    \begin{align}
        &Q^{\pi^*}_E + Q^{\pi^*}_I \\
        =&P_0(Q^{\pi^*}_{E} + Q^{\pi^*}_{I}) + (1-P_0)(Q^{\pi^*}_{E} + Q^{\pi^*}_{I}) \\
        =&P_0(Q^{\pi^*}_{E} + \mathop{\mathbb{E}}(F_0 + \gamma V^{\pi^*}_{I,0}(s'))) \\
        +&(1-P_0)(Q^{\pi^*}_{E} + \mathop{\mathbb{E}}(F_{\neq 0} + F_{2,\neq0} + \gamma V^{\pi^*}_{I,\neq 0}(s'))). \label{f_2_mid_2}
    \end{align}
    Here, due to our differing assumption, whether or not a contributing trajectory has its $F_2$ term set to zero is determined by the $\max$ function in the latter case of Equation \ref{f_2_strong_extras}, rather than the $\min$ function in the former. For trajectories where $F_2 \neq 0$, then, we get
    \begin{align}
        &Q^{\pi^*}_{E} + \mathop{\mathbb{E}} (F_{\neq 0} + F_{2,\neq 0} + \gamma V^{\pi^*}_{I,\neq 0}(s'))\\
        =&Q^{\pi^*}_{E} + \mathop{\mathbb{E}} (F_{\neq 0} + V^*_E - Q^{\pi^*}_E + V^*_I - \gamma V^{\pi^*}_{I, \neq 0}(s') - F_{\neq 0} + \gamma V^{\pi^*}_{I, \neq 0}(s')) \\ 
        =&Q^{\pi^*}_E + \mathop{\mathbb{E}} ( V^*_E - Q^{\pi^*}_E + V^*_I ) \\
        &= V^*_E + V^*_I.
    \end{align}
    For conditions with the $0$ subscript, we know from the condition in Equation \ref{f_2_strong_extras} that
    \begin{align}
        V^*_E - Q^{\pi^*}_E + V^*_I - \gamma V^{\pi^*}_{I,0}(s') - F_0 &\leq 0 \\
        Q^{\pi^*}_E + \gamma V^{\pi^*}_{I, 0}(s') + F_0 &\geq V^*_E + V^*_I \\
        Q^{\pi^*}_E + \mathop{\mathbb{E}} (F_0 + \gamma V^{\pi^*}_{I,0}(s') )&\geq V^*_E + V^*_I .
    \end{align}
    Looking first at the first decomposition of the $\neq$ in Equation \ref{f_2_2_assumption_cases_neq}, we get 
    \begin{align}
       V^*_E + V^*_I &> Q^{\pi^*}_E + Q^{\pi^*}_I \\
       &> P_0(V^*_E + V^*_I) + (1-P_0)(V^*_E + V^*_I) \\
       &> V^*_E + V^*_I.
    \end{align}
    This is a contradiction. The other decomposition of our assumption in Equation \ref{f_2_2_assumption_cases_neq} implies 
    \begin{align}
    V^*_E + V^*_I &< Q^{\pi^*}_E + Q^{\pi^*}_I \label{f_2_mid_3}\\ 
    V^*_E + V^*_I &< V^*_E + Q^{\pi^*}_I \label{f_2_mid_4}\\
    V^*_I &< Q^{\pi^*}_I. \\
    \end{align}
    Here, between Equations \ref{f_2_mid_3} and \ref{f_2_mid_4}, we applied our first assumption in Equation \ref{2nd_assumption}. Because we have defined $V_I^*$ to be the highest possible intrinsic reward among extrinsically optimal trajectories, this is a contradiction as well. Thus Equation~\eqref{equality_condition} must hold. This suffices to prove that the set of optimal policies remains unchanged.
\end{proof}

\end{document}